\def\BibTeX{{\rm B\kern-.05em{\sc i\kern-.025em b}\kern-.08em
    T\kern-.1667em\lower.7ex\hbox{E}\kern-.125emX}}
\newtheorem{definition}{Definition}
\newtheorem{proposition}{Proposition}
\newtheorem{claim}{Claim}
\acrodef{ML}{Machine Learning}
\acrodef{DL}{Deep Learning}
\acrodef{FL}{Federated Learning}
\acrodef{DNN}{Deep Neural Network}
\acrodef{GDPR}{General Data Protection Regulation}
\acrodef{DLG}{Deep Leakage from Gradients}
\acrodef{iDLG}{Improved \ac{DLG}}
\acrodef{IG}{Inverting Gradient}
\acrodef{GGL}{Generative Gradient Leakage}
\acrodef{PRECODE}{Privacy Enhancing Module}
\acrodef{CMA-ES}{Covariance Matrix Adaptation Evolution Strategy}
\acrodef{GRNN}{Generative Regression Neural Network}
\acrodef{GAN}{Generative Adversarial Network}
\acrodef{CNN}{Convolutional Neural Network}
\acrodef{DP}{Differential Privacy}
\acrodef{HE}{Homomorphic Encryption}
\acrodef{MPC}{Secure Multi-Party Computation}
\acrodef{MSE}{Mean Square Error}
\acrodef{PSNR}{Peak Signal-to-Noise Ratio}
\acrodef{LPIPS}{Learned Perceptual Image Patch Similarity}
\acrodef{SSIM}{Structure Similarity Index Measure}
\acrodef{ReLU}{Rectified Linear Unit}
\acrodef{CE}{Cross Entropy}
\acrodef{BN}{Batch Normalization}
\acrodef{FedAvg}{Federated Averaging}
\acrodef{FedAdp}{Federated Adaptive Weighting}
\acrodef{SGD}{Stochastic Gradient Descent}
\acrodef{GLAUS}{Gradient Leakage Attack Using Sampling}
\acrodef{UGS}{Unbiased Gradient Sampling-based}
\acrodef{MMS}{MinMax Sampling}
\begin{document}

\title{Gradients Stand-in for Defending Deep Leakage in Federated Learning\\
% {\footnotesize \textsuperscript{*}Note: Sub-titles are not captured in Xplore and
% should not be used}
% \thanks{Identify applicable funding agency here. If none, delete this.}
}

\author{\IEEEauthorblockN{1\textsuperscript{st} Yi Hu}
\IEEEauthorblockA{\textit{Department of Computer Science} \\
\textit{Swansea University}\\
Swansea, United Kingdom \\
845700@swansea.ac.uk}
~\\
\and
\IEEEauthorblockN{2\textsuperscript{nd} Hanchi Ren*}
\IEEEauthorblockA{\textit{Department of Computer Science} \\
\textit{Swansea University}\\
Swansea, United Kingdom \\
hanchi.ren@swansea.ac.uk}
*Corresponding author
~\\
\and
\IEEEauthorblockN{3\textsuperscript{th} Chen Hu}
\IEEEauthorblockA{\textit{Department of Computer Science} \\
\textit{Swansea University}\\
Swansea, United Kingdom \\
2100552@swansea.ac.uk}

\and
\IEEEauthorblockN{4\textsuperscript{th} Yiming Li}
\IEEEauthorblockA{\textit{Department of Computer Science} \\
\textit{Swansea University}\\
Swansea, United Kingdom \\
946802@swansea.ac.uk}

\and
\IEEEauthorblockN{5\textsuperscript{rd} Jingjing Deng}
\IEEEauthorblockA{\textit{Department of Computer Science} \\
\textit{Durham University}\\
Durham, United Kingdom \\
jingjing.deng@durham.ac.uk}
~\\

\and
\IEEEauthorblockN{6\textsuperscript{th} Xianghua Xie}
\IEEEauthorblockA{\textit{Department of Computer Science} \\
\textit{Swansea University}\\
Swansea, United Kingdom \\
x.xie@swansea.ac.uk}

}

\maketitle

\begin{abstract}
Federated Learning (FL) has become a cornerstone of privacy protection, shifting the paradigm towards localizing sensitive data while only sending model gradients to a central server. This strategy is designed to reinforce privacy protections and minimize the vulnerabilities inherent in centralized data storage systems. Despite its innovative approach, recent empirical studies have highlighted potential weaknesses in FL, notably regarding the exchange of gradients. In response, this study introduces a novel, efficacious method aimed at safeguarding against gradient leakage, namely, ``AdaDefense". Following the idea that model convergence can be achieved by using different types of optimization methods, we suggest using a local stand-in rather than the actual local gradient for global gradient aggregation on the central server. 
%We termed the method ``AdaDefense". 
This proposed approach not only effectively prevents gradient leakage, but also ensures that the overall performance of the model remains largely unaffected. Delving into the theoretical dimensions, we explore how gradients may inadvertently leak private information and present a theoretical framework supporting the efficacy of our proposed method. Extensive empirical tests, supported by popular benchmark experiments, validate that our approach maintains model integrity and is robust against gradient leakage, marking an important step in our pursuit of safe and efficient FL.
\end{abstract}

\begin{IEEEkeywords}
Federated Learning, Data Privacy, Gradient Leakage, Distributed Learning, Deep Neural Network
\end{IEEEkeywords}

%=====================================================================
\section{Introduction}
\label{sec:introduction}

\begin{figure}[t!]
    \centering
    \includegraphics[width=0.99\linewidth]{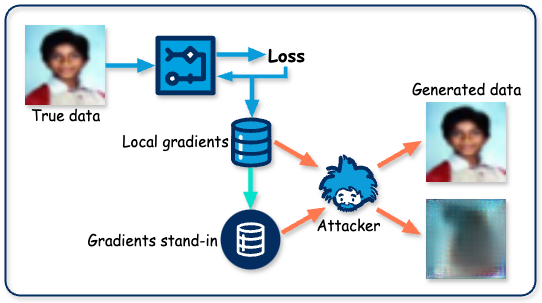}
    \caption{Illustration of AdaDefense against gradient leakage}
    \label{fig:illustration}
\end{figure}

The remarkable advancements in \ac{DNN} across diverse \ac{DL} tasks have significantly impacted everyday life. This progress largely depends on the availability of extensive training datasets. In recent years, there has been a surge in data creation, including much sensitive private data that cannot be shared due to privacy concerns. The implementation of the \ac{GDPR} in Europe underscores this by aiming to protect personal data integrity and regulate data exchange. Consequently, \ac{DL} models leveraging such datasets have enhanced the capabilities of numerous applications. Given these developments, exploiting all available data for training models in a legally compliant manner has gained substantial traction. This ensures the utility of expansive datasets while adhering strictly to privacy regulations.

For this purpose, \ac{FL} \cite{mcmahan2016communication, konevcny2016federated, konevcny2016federatedlearning, mcmahan2017federated} is employed as a decentralized model training framework where the central server accumulates local gradients from clients without requiring the exchange of local data. This approach ensures that data remains at its source, ostensibly preserving privacy because only gradient information is shared. Traditionally, these gradients have been assumed secure for sharing, making \ac{FL} a prime application in privacy-sensitive scenarios. Nonetheless, recent studies have exposed vulnerabilities within this gradient-sharing framework. L. Zhu, Z. Liu and S. Han~\cite{zhu2019deep} introduced \ac{DLG}, a technique for reconstructing the original training data by iteratively optimizing randomly initialized input images and labels to match the shared gradients, rather than updating the model parameters directly. However, the performance of \ac{DLG} is limited by factors such as the training batch size and the resolution of images, which can lead to instability in data recovery. To address these instabilities, J. Geiping, H. Bauermeister, H. Dr{\"o}ge and M. Moeller~\cite{geiping2020inverting} proposed \ac{IG}, which utilizes a magnitude-invariant cosine similarity as the loss function to enhance the stability of training data reconstruction. This method has proven effective in recovering high-resolution images ($224*224$ pixels) from gradients of large training batches (up to 100 samples). Further refining the approach, \ac{iDLG} by B. Zhao, K.R. Mopuri and H. Bilen~\cite{zhao2020idlg} simplifies the label recovery process within \ac{DLG} by analytically deriving the ground-truth labels from the gradients of the loss function relative to the \emph{Softmax} layer outputs, thus improving the precision of the reconstructed data. Expanding on generative methods, H. Ren, J. Deng and X. Xie~\cite{ren2022grnn} developed \ac{GRNN}, which integrates two generative model branches: one, a \ac{GAN}-based branch for creating synthetic training images, and another, a fully-connected layer designed to generate corresponding labels. This model facilitates the alignment of real and synthetic gradients to reveal the training data effectively. Recently, Z. Li, J. Zhang, L. Liu and J. Liu~\cite{li2022auditing} introduced \ac{GGL}, a novel method that also utilizes a pre-trained \ac{GAN} to generate fake data. This approach leverages the \ac{iDLG} concept to deduce true labels and adjusts the \ac{GAN} input sequence based on the gradient matching, thereby producing fake images that closely resemble the original training images. The paper~\cite{yang2023reveal} introduces a novel attack method named \ac{GLAUS} which targets \ac{UGS} secure aggregation methods in \ac{FL}. These \ac{UGS}, e.g. \ac{MMS}~\cite{zhao2022minmax}, methods are designed to enhance security by using unbiased random transformations and gradient sampling to prevent direct access to the real gradients during model training, thereby obscuring private client data. These developments underscore the ongoing need to enhance the security measures in \ac{FL} systems against sophisticated attacks that seek to exploit gradient information, thus compromising the privacy of sensitive data. The continuous evolution of defensive techniques is crucial to safeguarding data integrity in federated environments.

These issues prompted us to evaluate the reliability of the \ac{FL} system. Concerns about gradient leakage have triggered various defense strategies, such as gradient perturbation~\cite{zhu2019deep, yang2022accuracy, sun2021ldp, sun2021soteria, ren2022grnn}, data obfuscation or sanitization~\cite{hasan2016effective, chamikara2018efficient, chamikara2020efficient, lee2021digestive, chamikara2021privacy}, along with other techniques~\cite{bu2020deep, li2020privacy, yadav2020differential, wei2021gradient, ren2024fedboosting}. Nonetheless, these methods generally involve compromises between privacy protection and computational efficiency, often requiring substantial computational resources due to the complexities of encryption technologies. The study in~\cite{wei2020framework} delves into the factors in an \ac{FL} system that could influence gradient leakage, such as batch size, image resolution, the type of activation function used, and the number of local iterations prior to the exchange of gradients. Various studies corroborate these findings; for instance, the \ac{DLG} method indicates that activation functions need to be bi-differentiable. \ac{IG} techniques can reconstruct images with resolutions up to $224 * 224$, whereas \ac{GRNN} are compatible with batch sizes of 256 and image resolutions of $256 * 256$, also assessing how the frequency of local iterations might impact privacy leakage. Moreover, recent advancements by D. Scheliga, P. M{\"a}der and M. Seeland~\cite{scheliga2022precode} introduced an innovative model extension, named \ac{PRECODE}, designed specifically to enhance privacy safeguards against potential leakages in \ac{FL} systems. This adaptability highlights the evolving trend of \ac{FL} security measures that seek to enhance the balance between performance and privacy without imposing excessive computational requirements.

In traditional model training strategies, various optimization algorithms can alter the low-level representations of gradients while preserving their high-level representations. For instance, unlike \ac{SGD} which utilizes raw gradients for model updates~\cite{ruder2016overview}, Adam~\cite{kingma2014adam} adjusts gradient values using adaptive estimates based on first-order and second-order moments. This work introduces an alternative approach by employing gradients stand-in for global gradient aggregation, aimed at preventing the transmission of private information to the parameter server (refer to Fig.~\ref{fig:illustration}). After an epoch of local training, the local gradients are modified through the Adam optimization algorithm, which utilizes adaptive estimates of lower-order moments. Critical information, such as first-order and second-order moments, is retained locally, thereby rendering it impossible for an adversary to execute the forward-backward procedure without access to this data. Consequently, reconstructing local training data from the shared gradients becomes infeasible within the \ac{FL} framework. The gradients stand-in calculated in this manner not only mitigate the risk of gradient leakage but also preserve model performance. Additionally, the simplicity and computational efficiency of the method, make it particularly attractive. The versatility of the approach is evident as it does not impose constraints on the model architecture, the local training optimization strategy, the dataset, or the \ac{FL} aggregation technique. Comprehensive evaluations on various benchmark networks and datasets have demonstrated that our method, termed AdaDefense, effectively addresses the identified privacy concerns. The implementation of AdaDefense is available on Github\footnote{\url{https://github.com/Rand2AI/AdaDefense}}.

The structure of this paper is organized as follows: Section~\ref{sec:rw} reviews related work on gradient leakage attacks along with existing defense mechanisms. Section~\ref{sec:pm} details the proposed method for defending against gradient leakage. Experimental results are presented in Section~\ref{sec:er}, where we compare the efficacy of our proposed AdaDefense with state-of-the-art attack methods. Finally, Section~\ref{sec:cc} concludes the paper and outlines future research directions.

%=====================================================================
\section{Related Work}
\label{sec:rw}

%----------------------------------------------------------------------
\subsection{Attack Methods}

Gradient leakage attacks pose a significant threat to privacy in both centralized and collaborative \ac{DL} systems by potentially exposing private training data through leaked gradients. Such attacks are particularly prevalent in centralized systems, often through membership inference attacks as discussed by \cite{shokri2017membership,truex2018towards, truex2019demystifying,choquette2021label,zhang2022label,xie2024survey}. 

The work~\cite{zhu2019deep} was pioneer in examining data reconstruction from leaked gradients within collaborative systems, an effort furthered by \cite{geiping2020inverting} which introduced an optimization-based method to enhance the stability of such attacks. They utilized magnitude-invariant cosine similarity measurements for the loss function and demonstrated that incorporating prior knowledge could significantly augment the efficiency of gradient leakage attacks. Expanding on these findings, the work~\cite{jeon2021gradient} argued that gradient information alone might not suffice for revealing private data, thus they proposed the use of a pre-trained model, GIAS, to facilitate data exposure. In~\cite{yin2021see}, the authors found that in image classification tasks, ground-truth labels could be easily discerned from the gradients of the last fully-connected layer, and that \ac{BN} statistics markedly enhance the success of gradient leakage attacks by disclosing high-resolution private images. 

Alternative approaches involve generative models, as explored in \cite{hitaj2017deep}, which employed a \ac{GAN}-based method for data recovery that replicates the training data distribution. Similarly, the work~\cite{wang2019beyond} developed mGAN-AI, a multitask discriminator-enhanced \ac{GAN}, to reconstruct private information from gradients. \ac{GRNN}, introduced~\cite{ren2022grnn}, is capable of reconstructing high-resolution images and their labels, effectively handling large batch sizes. Likewise, in the work of \ac{GGL}~\cite{li2022auditing}, a \ac{GAN} with pre-trained and fixed weights was used. \ac{GGL} differs in its use of the \ac{CMA-ES} as the optimizer, reducing variability in the generated data, which, while not exact replicas, closely resemble the true data. This characteristic significantly enhances \ac{GGL}'s robustness, allowing it to counteract various defensive strategies including gradient noising, clipping, and compression. This series of developments highlights the dynamic and evolving nature of combating gradient leakage attacks in \ac{DL}.

\ac{GLAUS}~\cite{yang2023reveal} demonstrates a significant vulnerability in \ac{UGS} methods by showing that they can still be susceptible to gradient leakage attacks. This is done by reconstructing private data points with considerable accuracy despite the supposed robust security measures of \ac{UGS}. \ac{GLAUS} circumvents the safeguards by approximating the gradient using leaked indices and signs from the \ac{UGS}'s crafted random transformations. It effectively reduces the security of \ac{UGS} frameworks to that of basic \ac{FL} models without additional protections. It is capable of adaptively inferring the gradient without needing the exact gradient, utilizing an approximate gradient reconstructed through several steps. These include narrowing the gradient search range, estimating the magnitude of each gradient value, and revising the gradient signs. This method showcases that even when real gradients are not directly accessible, sensitive data can still be reconstructed, posing a serious security risk.

%----------------------------------------------------------------------
\subsection{Defense Methods}

Numerous strategies~\cite{xie2024survey} have been developed to safeguard private data against potential leakage via gradient sharing in \ac{FL}. Techniques such as gradient perturbation, data obfuscation or sanitization, \ac{DP}, \ac{HE}, and \ac{MPC} have been employed to protect both the private training data and the publicly shared gradients~\cite{li2020privacy}. 

The work in~\cite{zhu2019deep} assessed the efficacy of Gaussian and Laplacian noise types in protecting data, discovering that the magnitude of the noise's distribution variance is crucial, with a variance threshold above $10^{-2}$ effectively mitigating leakage attacks at the cost of significant performance degradation. The work~\cite{chamikara2021privacy} introduced a data perturbation method that maintains model performance while securing the privacy of training data. This approach transforms the input data matrix into a new feature space through a multidimensional transformation, applying variable scales of transformation to ensure adequate perturbation. However, this technique depends on a centralized server for generating global perturbation parameters and may distort the structural information in image-based datasets. The method proposed in~\cite{wei2021gradient} implemented \ac{DP} to introduce noise into each client's training dataset using a per-example-based \ac{DP} method, termed Fed-CDP. They proposed a dynamic decay method for noise injection to enhance defense against gradient leakage and improve inference performance. Despite its effectiveness in preventing data reconstruction from gradients, this method significantly reduces inference accuracy and incurs high computational costs due to the per-sample application of \ac{DP}. 

Both \ac{PRECODE}~\cite{scheliga2022precode} and FedKL~\cite{ren2023gradient} aim to prevent input information from propagating through the model during gradient computation. \ac{PRECODE} achieves this by incorporating a probabilistic encoder-decoder module ahead of the output layer, which normalizes the feature representations and significantly hinders input data leakage through gradients. This process involves encoding the input features into a sequence, normalizing based on calculated mean and standard deviation values, and then decoding into a latent representation that feeds into the output layer. While effective, the additional computational overhead from the two fully-connected layers required by \ac{PRECODE} limits its applicability to shallow \ac{DNN}s due to high computational costs. In contrast, FedKL method introduces a key-lock module that manages the weight parameters with a hyper-parameter controlling the input dimension and an output dimension optimized for specific architectures; 16 for \emph{ResNet-20} and \emph{ResNet-32} on CIFAR-10 and CIFAR-100, and 64 for \emph{ResNet-18} and \emph{ResNet-34} on the ILSVRC2012 dataset. This configuration not only secures the gradients against leakage but also maintains manageable computational demands, making it feasible for more complex \ac{DNN}s. 

Diverging from the above methods, our newly proposed AdaDefense strategically integrates the Adam algorithm as a plugin component. This integration processes the original local gradients to produce a gradients stand-in, which is then used for global gradient aggregation. The gradients stand-in preserves the high-level latent representations of the original gradients, thereby ensuring minimal impact on model performance following each aggregation round. Furthermore, since the complex computation of gradients stand-in and both the first-order and second-order moments are retained locally, it becomes infeasible for a malicious attacker to reconstruct private training data from the gradients stand-in. This enhancement not only fortifies the privacy safeguards within \ac{FL} environments but also maintains the integrity and effectiveness of the learning process.

%=====================================================================
\section{Methodology}
\label{sec:pm}

In the initial subsection, we provide an overview of how gradient leakage can expose private training data. Subsequently, we performed a mathematical analysis to demonstrate the efficacy of our proposed gradients stand-in in defending leakage attacks. This analysis explains how the gradients stand-in protects sensitive information while maintaining model performance. Finally, we present a comprehensive introduction to the proposed AdaDefense method.

%----------------------------------------------------------------------
\subsection{Gradient Leakage}

In the context of \ac{ML}, particularly in training neural networks, the gradient points in the direction in which the function (often a loss or cost function) most quickly increases. Finding the function's minimum involves moving in the direction opposite to the gradient, a process known as gradient descent.

\begin{definition}
    \textbf{Gradient:} Consider a function $\mathbf{f}: \mathbb{R}^n \rightarrow \mathbb{R}^m$, which maps a vector $\mathbf{x}$ in $n$-dimensional space to a vector $\mathbf{y}$ in $m$-dimensional space, defined by $\mathbf{f}(\mathbf{x}) = \mathbf{y}$ where $\mathbf{x} \in \mathbb{R}^n$ and $\mathbf{y} \in \mathbb{R}^m$. The gradient of $\mathbf{f}$ with respect to $\mathbf{x}$ is represented by the Jacobian matrix as follows:

    \begin{align}
        \frac{\partial \mathbf{f}}{\partial \mathbf{x}} =
        \begin{pmatrix}
            \frac{\partial f_1}{\partial x_1} & \cdots & \frac{\partial f_1}{\partial x_n} \\
            \vdots & \ddots & \vdots \\
            \frac{\partial f_m}{\partial x_1} & \cdots & \frac{\partial f_m}{\partial x_n}
        \end{pmatrix} \nonumber
    \end{align}
    This matrix consists of partial derivatives where the element in the $i$-th row and $j$-th column, $\frac{\partial f_i}{\partial x_j}$, represents the partial derivative of the $i$-th component of $\mathbf{y}$ with respect to the $j$-th component of $\mathbf{x}$.
\end{definition}

The \emph{Chain Rule} is a rule in calculus that describes the derivative of a composite function. In simple terms, if a variable $z$ depends on $y$, and $y$ depends on $x$, then $z$, indirectly through $y$, depends on $x$, and the chain rule helps in finding the derivative of $z$ with respect to $x$. If $z = f(y)$ and $y = g(x)$, then:

\begin{align}
    \frac{dz}{dx} = \frac{dz}{dy} \cdot \frac{dy}{dx} \nonumber
\end{align}

Many recent studies have highlighted the vulnerability of gradient data to leakage attacks. In their research, the work~\cite{geiping2020inverting} demonstrated that for fully-connected layers, the gradients of the loss with respect to the outputs can reveal information about the input data. By employing the \emph{Chain Rule}, it is possible to reconstruct the inputs of a fully-connected layer using only its gradients, independent of the gradients from other layers. Expanding on this, in FedKL~\cite{ren2023gradient}, the authors explored this vulnerability in convolutional and \ac{BN} layers across typical supervised learning tasks. They provided a detailed analysis of how gradients carry training data, which can be exploited by an attacker to reconstruct this data. Their results confirmed a strong correlation between the gradients and the input training data across linear and non-linear neural networks, including \ac{CNN} and \ac{BN} layers. Ultimately, they concluded that in image classification tasks, the gradients from the fully-connected, convolutional, and \ac{BN} layers contain ample information about the input data and the true labels. This richness of information enables an attacker to reconstruct both the inputs and labels by regressing the gradients. We can conclude that:
\begin{proposition}
    In image classification tasks, the gradients from the fully connected, convolutional and \ac{BN} layers contain a great deal of information about the input data and the actual labels. These details allow an attacker to approximate these gradients and effectively reconstruct the corresponding inputs and labels.
\end{proposition}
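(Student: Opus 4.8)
The plan is to prove the proposition constructively, handling the three named layer types in turn and showing in each case that the per-layer gradient either determines the layer input exactly or tightly constrains it, and then to assemble these facts into an end-to-end reconstruction of the image and label. I would not attempt a single global estimate; instead I would isolate the algebraic core (which is exact for linear and, modulo reindexing, convolutional layers) from the end-to-end reconstruction claim (which is what the gradient-matching attacks of \ac{DLG}/\ac{IG} type realise in practice).

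First, for a fully-connected layer $\mathbf{y} = \mathbf{W}\mathbf{x} + \mathbf{b}$ feeding a scalar loss $L$, I would apply the \emph{Chain Rule} stated above to obtain $\partial L/\partial \mathbf{W} = (\partial L/\partial \mathbf{y})\,\mathbf{x}^{\top}$ and $\partial L/\partial \mathbf{b} = \partial L/\partial \mathbf{y}$; dividing any row of the weight gradient by the corresponding entry of the bias gradient recovers $\mathbf{x}^{\top}$ exactly whenever that entry is nonzero, so the input to the first fully-connected layer is literally a quotient of observed gradients. This is the cleanest step and I would present it in full. For the label, I would use the \ac{CE}/\emph{Softmax} structure: with output $\hat{\mathbf{y}}$ and one-hot ground truth at class $c$, the gradient with respect to the logits is $\hat{\mathbf{y}} - \mathbf{e}_c$, whose $c$-th coordinate is the unique negative one (since $\hat y_i \in (0,1)$ and $\sum_i \hat y_i = 1$); pulling this back through the last linear layer, the sign of the row of $\partial L/\partial \mathbf{W}$ tied to class $c$ flips relative to the others, so $c$ is read off analytically. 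This is the \ac{iDLG}-style argument and needs only a few lines.

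Second, for convolutional and \ac{BN} layers I would reduce to the linear case: a convolution is linear in its input via the im2col / Toeplitz reformulation, so the same quotient identity applies up to patch-overlap bookkeeping, and the \ac{BN} gradients expose the per-channel batch mean and variance, which pins the affine-normalized activations. For these two cases I would lean on the analyses already cited --- the fully-connected result of \cite{geiping2020inverting} and the convolutional/\ac{BN} treatment of FedKL \cite{ren2023gradient} --- rather than reproving them. To justify ``effectively reconstruct'' at the network level, I would observe that the exact per-layer identities above make the gradient-matching objective used by \ac{DLG}/\ac{IG}-type attacks attain its global minimum at the true $(\mathbf{x},c)$ with informative (non-degenerate) local geometry, so an optimization-based adversary recovers the input and label; the quantitative strength of that recovery is the empirical content validated by the cited attacks and by our experiments in Section~\ref{sec:er}.

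The main obstacle is precisely that the proposition blends an exact algebraic fact with an essentially empirical one: ``a great deal of information'' and ``effectively reconstruct'' are not closed-form inversions of a deep nonlinear network with pooling and \ac{ReLU}s. The honest structure is therefore to prove the algebraic core rigorously for linear, convolutional, and \ac{BN} layers, and for the full-model reconstruction to invoke the established attack literature and defer the quantitative guarantee to the experiments --- here ``completeness'' means reduction-plus-citation, not a stand-alone bound.
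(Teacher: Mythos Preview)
Your plan is sound, and in fact it goes well beyond what the paper itself does. In the paper this proposition is not given a proof at all: it is asserted as a summary of results from the cited literature --- the fully-connected analysis of \cite{geiping2020inverting} and the convolutional/\ac{BN} treatment in FedKL \cite{ren2023gradient} --- and is presented as the conclusion of the preceding paragraph rather than as a statement to be independently established.

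Your proposal follows the same citation structure (the same two references carry the same two layer-type burdens), so in spirit the approaches coincide. The difference is that you actually write out the algebraic core: the rank-one identity $\partial L/\partial \mathbf{W} = (\partial L/\partial \mathbf{y})\,\mathbf{x}^{\top}$ and the quotient recovery of $\mathbf{x}$, the \ac{iDLG}-style sign argument for the label, and the im2col reduction for convolutions. That buys you a self-contained derivation where the paper relies purely on citation, and your explicit acknowledgement that ``effectively reconstruct'' is ultimately an empirical claim (deferred to the attack literature and experiments) is exactly the right caveat --- one the paper leaves implicit.
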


%----------------------------------------------------------------------
\subsection{Theoretical Analysis on Gradients Stand-in}

\noindent\textbf{Model Performance:}
In this part, we explore the influence of using gradients stand-in on model performance, and provide justification for employing gradients stand-in during global gradient aggregation in a \ac{FL} context. We begin by reviewing the Adam~\cite{kingma2014adam} optimization algorithm, a method of \ac{SGD} that incorporates momentum concepts. Each iteration in Adam involves computing the first-order and second-order moments of the gradients, followed by calculating their exponential moving averages to update the model parameters. This approach merges the strengths of the Adagrad \cite{duchi2011adaptive} algorithm, which excels in handling sparse data, with those of the RMSProp \cite{hinton2012neural} algorithm, designed to manage non-smooth data effectively. Collectively, these features enable Adam to deliver robust performance across a wide range of optimization problems, from classical convex formulations to complex \ac{DP} tasks. Further details on the Adam algorithm are presented in Algorithm~\ref{alg:adam}.

\begin{algorithm}[ht!]
    \caption{Adam Optimization Algorithm}
    \label{alg:adam}
    \begin{algorithmic}[1]
        \REQUIRE Initial learning rate $\alpha$
        \REQUIRE Exponential decay rates for moment estimates $\beta_1$, $\beta_2$ $\in [0,1]$
        \REQUIRE Small constant for numerical stability $\epsilon$
        \REQUIRE Maximum number of iterations $I$
        \STATE Initialize weights $\omega_0$
        \STATE Initialize the first-order moment vector $m_0 \leftarrow 0$
        \STATE Initialize the second-order moment vector $v_0 \leftarrow 0$
        \STATE Initialize the time-step $t \leftarrow 0$
        \FOR{$t = 1$ to $I$}
            \STATE $g_t \leftarrow \nabla_\omega \mathcal{L}(\omega_{t-1})$ $\hfill\blacktriangleright$Get gradients w.r.t. stochastic objective at time-step $t$
            \STATE $m_t \leftarrow \beta_1 \cdot m_{t-1} + (1 - \beta_1) \cdot g_t$ $\hfill\blacktriangleright$Update biased first-order moment estimate
            \STATE $v_t \leftarrow \beta_2 \cdot v_{t-1} + (1 - \beta_2) \cdot g_t^2$ $\hfill\blacktriangleright$Update biased second-order moment estimate
            \STATE $\hat{m}_t \leftarrow \frac{m_t}{1 - \beta_1^t}$ $\hfill\blacktriangleright$Compute bias-corrected first-order moment estimate
            \STATE $\hat{v}_t \leftarrow \frac{v_t}{1 - \beta_2^t}$ $\hfill\blacktriangleright$Compute bias-corrected second-order moment estimate
            \STATE $\omega_t \leftarrow \omega_{t-1} - \alpha \cdot \frac{\hat{m}_t}{\sqrt{\hat{v}_t} + \epsilon}$ $\hfill\blacktriangleright$Update parameters
        \ENDFOR
        \RETURN $\omega_t$ $\hfill\blacktriangleright$Output the optimized parameters
    \end{algorithmic}
\end{algorithm}

In the local training phase of \ac{FL}, we concentrate on the aggregated gradients from each training round rather than the gradients from individual iterations. Specifically, the local gradients that are transmitted to the global server represent the cumulative sum of all gradients produced during the local training iterations. Common \ac{FL} aggregation methods, such as \ac{FedAvg}~\cite{mcmahan2016communication}, FedAdp~\cite{wu2021fast}, and FedOpt~\cite{reddi2020adaptive}, employ these cumulative local gradients for global model updates. This approach effectively outlines the model's convergence path and direction within an \ac{FL} framework. Moreover, studies such as Adam~\cite{kingma2014adam} optimization algorithm has demonstrated superior convergence properties compared to traditional \ac{SGD}, underscoring its widespread adoption in the \ac{DL} domain. Based on these insights, we propose a novel method of representing local gradients using the Adam algorithm to enhance privacy and maintain model efficacy in \ac{FL}. Experimental outcomes discussed in Section~\ref{sec:er} confirm the viability of using these modified gradients for global aggregation, thereby supporting the integrity and performance of the \ac{FL} model.

\noindent\textbf{Leakage Defense:}
In the FedKL~\cite{ren2023gradient}, it was demonstrated that private training data leakage through gradients sent to a global server is a significant concern. The authors of FedKL introduced a novel approach involving a key-lock pair to generate shift and scale parameters in the \ac{BN} layer, which are typically trainable within the model. Crucially, in FedKL, these generated parameters are retained on the local clients. The primary aim of FedKL is to sever the transmission of private training data via gradients, thereby safeguarding against potential leakage by malicious servers. This method involves modifications to the network architecture, including additional layers that are responsible for generating the shift and scale parameters. While these changes have minimal impact on model performance, they significantly reduce the system's efficiency due to the increased computational overhead. In our work, we adhere to the principle of preventing the propagation of private training data through gradients. However, unlike FedKL, we achieve this without altering the network architecture or adding extra training layers, thus avoiding additional computational burdens. The proposed approach maintains system efficiency while still protecting against data leakage.

\begin{claim}
\label{claim:1}
    The gradients stand-in transmitted to the global server no longer contains information from which details of the private training data can be inferred.
\end{claim}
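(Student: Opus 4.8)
The plan is to argue that the gradients stand-in $\tilde g = \hat m_t / (\sqrt{\hat v_t} + \epsilon)$ breaks the functional relationship between transmitted gradients and private inputs that Proposition 1 relies on. Recall that all known gradient-leakage attacks (DLG, iDLG, IG, GRNN, GGL) operate by minimising a discrepancy between the \emph{true} layerwise gradients $g = \nabla_\omega \mathcal{L}(\omega; x, y)$ and the gradients produced by a candidate $(\hat x, \hat y)$ through the \emph{same} forward--backward map $F$: the attack succeeds precisely because $g = F(x, y; \omega)$ is a known, differentiable function of the data. First I would make this dependence explicit, then show that the server never observes $g$ but only $\tilde g$, and that recovering $g$ from $\tilde g$ requires the locally-held state $(m_{t-1}, v_{t-1})$ and the iteration count $t$.

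The key steps, in order, are: (i) Write the stand-in componentwise as $\tilde g_i = \hat m_{t,i} / (\sqrt{\hat v_{t,i}} + \epsilon)$ where $\hat m_{t,i} = m_{t,i}/(1-\beta_1^t)$ and $\hat v_{t,i} = v_{t,i}/(1-\beta_2^t)$, and expand $m_{t,i}, v_{t,i}$ as the exponential moving averages $m_{t,i} = (1-\beta_1)\sum_{k=1}^{t} \beta_1^{t-k} g_{k,i}$ and $v_{t,i} = (1-\beta_2)\sum_{k=1}^{t} \beta_2^{t-k} g_{k,i}^2$ over the local iterations. (ii) Observe that, even in the idealised single-iteration case ($t=1$), the stand-in collapses to $\tilde g_i = g_{1,i}/(|g_{1,i}| + \epsilon')$ for a small $\epsilon'$, i.e. each coordinate is mapped essentially to its \emph{sign} (scaled), so the magnitude information carried in $g$ — which IG's cosine objective and GRNN's regression both exploit — is destroyed and the map $g_i \mapsto \tilde g_i$ is non-invertible (many gradient vectors yield the same stand-in up to $O(\epsilon)$). (iii) For $t>1$, note that inverting the system to obtain any individual $g_{k,i}$, and hence to run the gradient-matching attack, would require the attacker to know $m_{t-1,i}$, $v_{t-1,i}$, $t$, and the per-iteration decomposition — all retained on the client and never transmitted — so the forward--backward consistency equation the attacker needs to set up is underdetermined. (iv) Conclude that, since no differentiable relation $\tilde g = \tilde F(x,y;\omega)$ is available to the server without the private moment state, the optimisation that underlies every attack in Proposition 1 cannot be instantiated, so $\tilde g$ carries no usable information about $x$ or $y$.

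I would support step (ii)--(iii) with a small dimensionality/information-theoretic remark: the map from the sequence $(g_1, \dots, g_t)$ to $(\tilde g, \text{discarded } m_t, v_t)$ is lossy once $m_t, v_t$ are withheld, and the sign-like saturation for small $\hat v$ makes it lossy even coordinatewise, so there is no measurable leftover correlation of the kind FedKL quantified for raw gradients. The main obstacle is that ``contains no information from which the private data can be inferred'' is a strong, essentially information-theoretic claim, whereas what one can rigorously establish is the weaker but sufficient statement that the stand-in is not a function of the data alone and that the standard attack reduction is broken; I would therefore state the claim at the level of ``the attack optimisation cannot be posed without the locally-retained moments'', mirroring the argument style of FedKL, and defer the empirical confirmation of non-leakage to the experiments in Section~\ref{sec:er}. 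A secondary subtlety is handling the $\epsilon$ term carefully so that the non-invertibility argument in (ii) is not vacuous when some coordinates of $\hat v_t$ are themselves of order $\epsilon^2$.
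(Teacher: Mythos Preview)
Your proposal is correct in spirit and lands on the same conclusion as the paper---that the attacker cannot instantiate the gradient-matching optimisation without the locally-retained moment state---but the route is genuinely different. The paper does not argue via non-invertibility or sign saturation at all; instead it carries out an explicit calculus computation of the scalar sensitivity $\partial \hat g_r/\partial g_r$. It writes the Adam recursions \eqref{eq:m}--\eqref{eq:gg}, applies the quotient rule to $\hat g_r=\hat m_r/(\sqrt{\hat v_r}+\epsilon)$, and then, under the approximation $\sqrt{\hat v_r}+\epsilon\approx\alpha g_r$ (justified separately for small and large $r$ in the appendix), simplifies to the closed form $\partial\hat g_r/\partial g_r=-\beta_1 m_{r-1}/(\alpha(1-\beta_1^r)g_r^2)$. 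The punchline is that this Jacobian factor contains $m_{r-1}$, which never leaves the client, so the chain-rule path from $\hat g_r$ back to the input is severed.

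What each approach buys: the paper's derivative calculation is mechanical and yields an explicit dependence on $m_{r-1}$, but the step from ``Jacobian depends on hidden state'' to ``no inference is possible'' is left implicit, and the approximation $V\approx\alpha g_r$ is fairly coarse. Your argument is conceptually cleaner about \emph{why} the attack optimisation cannot be posed (the forward--backward map $\tilde F$ is unknown to the server), and your step~(ii) treating $t=1$ via the sign-like collapse $\tilde g_i\approx g_i/|g_i|$ is a nice observation the paper does not make---indeed, at $r=1$ the paper's closed form gives $\partial\hat g_1/\partial g_1\propto m_0=0$, which obscures rather than explains the single-round case. Two small caveats: first, in the paper the Adam transform is indexed by \emph{rounds} $r$, not local iterations, so your expansion $m_t=(1-\beta_1)\sum_k\beta_1^{t-k}g_k$ runs over rounds and there is exactly one $g_r$ per round; second, your own remark that sign information survives means the strong ``no information'' phrasing of the claim is not literally established by either argument---both really prove the weaker operational statement you identify, and the paper likewise defers the rest to the experiments.
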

\begin{proof}
    Assuming $g_r$ is the local gradients in the $r$-th training round and $\hat{g}_r$ is the gradients stand-in for global gradients aggregation. By applying the Adam in Algorithm~\ref{alg:adam}, we have:
    \begin{align}
        m_r &= \beta_1 m_{r-1} + (1 - \beta_1) g_r \label{eq:m}\\
        v_r &= \beta_2 v_{r-1} + (1 - \beta_2) g_r^2 \label{eq:v}\\
        \hat{m}_r &= \frac{m_r}{1 - \beta_1^r} \label{eq:mm}\\
        \hat{v}_r &= \frac{v_r}{1 - \beta_2^r} \label{eq:vv}\\
        \hat{g}_r &= \frac{\hat{m}_r}{\sqrt{\hat{v}_r} + \epsilon} \label{eq:gg}
    \end{align}
    where $\beta_1$ and $\beta_2$ are set to $0.9$ and $0.999$, separately; the values in $g_0$ are initialized to all zeros; $g_r \in \mathbb{R}^D$, $m_r \in \mathbb{R}^D$, $v_r \in \mathbb{R}^D$, $\hat{m}_r \in \mathbb{R}^D$, $\hat{v}_r \in \mathbb{R}^D$ and $\hat{g}_r \in \mathbb{R}^D$, $D$ is the dimension of the parameter space, $m_r$ is the biased first-order moment estimate in the $r$-th training round, $v_r$ is the biased second-order moments estimate, $\hat{m}_r$ is the bias-corrected first-order moment estimate and $\hat{v}_r$ is the bias-corrected second-order moment estimate. The derivative of the gradients stand-in, $\hat{g}_r$, w.r.t. the original local gradients, $g_r$, can be expressed as:
    \begin{align}
        \frac{\partial{\hat{g}_r}}{\partial g_r} &= \frac{\partial{\hat{g}_r}}{\partial{\hat{m}_r}} \frac{\partial{\hat{m}_r}}{\partial g_r} +  \frac{\partial{\hat{g}_r}}{\partial{\hat{v}_r}} \frac{\partial{\hat{v}_r}}{\partial g_r}
    \end{align}
    Given \eqref{eq:m} and \eqref{eq:v}, differentiating $m_r$ and $v_r$ w.r.t. $g_r$:
    \begin{align}
        \frac{\partial m_r}{\partial g_r} &= 1 - \beta_1 \\
        \frac{\partial v_r}{\partial g_r} &= 2(1 - \beta_2)g_r
    \end{align}
    Given \eqref{eq:mm} and \eqref{eq:vv}, differentiating $\hat{m}_r$ and $\hat{v}_r$ w.r.t. $g_r$:
    \begin{align}
        \frac{\partial \hat{m}_r}{\partial g_r} &= \frac{\frac{\partial m_r}{\partial g_r}}{1 - \beta_1^r} = \frac{1 - \beta_1}{1 - \beta_1^r} \\
        \frac{\partial \hat{v}_r}{\partial g_r} &= \frac{\frac{\partial v_r}{\partial g_r}}{1 - \beta_2^r} = \frac{2(1 - \beta_2)g_r}{1 - \beta_2^r}
    \end{align}
    Given \eqref{eq:gg}, using the quotient rule, where $u = \hat{m}_r$ and $z = \sqrt{\hat{v}_r} + \epsilon$:
    \begin{align}
        \frac{\partial{\hat{g}_r}}{\partial g_r} &= \frac{\frac{\partial \hat{m}_r}{\partial g_r} z - u \frac{\partial}{\partial g_r} (\sqrt{\hat{v}_r} + \epsilon)}{z^2}
    \end{align}
    Calculating $\frac{\partial}{\partial g_r} (\sqrt{\hat{v}_r} + \epsilon)$:
    \begin{align}
        \frac{\partial \sqrt{\hat{v}_r}}{\partial g_r} &= \frac{1}{2\sqrt{\hat{v}_r}} \cdot \frac{\partial \hat{v}_r}{\partial g_r} \nonumber\\
        &= \frac{1}{2\sqrt{\hat{v}_r}} \cdot \frac{2(1 - \beta_2)g_r}{1 - \beta_2^r} \nonumber\\
        &= \frac{(1 - \beta_2) g_r}{\sqrt{\hat{v}_r} (1 - \beta_2^r)}
    \end{align}
    Then, differentiating $\hat{g}_r$ w.r.t. $g_r$:
    \begin{align}
        \frac{\partial{\hat{g}_r}}{\partial g_r} &= \frac{(\frac{1 - \beta_1}{1 - \beta_1^r})(\sqrt{\hat{v}_r} + \epsilon) - \hat{m}_r \cdot \frac{(1 - \beta_2)g_r}{\sqrt{\hat{v}_r}(1-\beta_2^r)}}{(\sqrt{\hat{v}_r} + \epsilon)^2} \label{eq:dgg}
    \end{align}
    Substitute $\hat{m}_r$ and $\hat{v}_r$ in \eqref{eq:dgg} referring to \eqref{eq:m}, \eqref{eq:v}, \eqref{eq:mm} and \eqref{eq:vv}. To simplify the equation. We define:
    \begin{align}
        V &= \sqrt{\hat{v}_r} \nonumber\\
        &= \sqrt{\frac{\beta_2 v_{r-1} + (1-\beta_2) g_r^2}{1-\beta_2^r}} \label{eq:vvv}
    \end{align}
    Then, the equation would be:
    % \begin{small}
    \begin{align}
        \frac{\partial{\hat{g}_r}}{\partial g_r} &= \frac{(1-\beta_1)(V+\epsilon) - (\frac{(1-\beta_2)\beta_1 m_{r-1} + (1-\beta_2)(1-\beta_1)g_r}{V\sqrt{1-\beta_2^r}})g_r}{(1-\beta_1^r)(V+\epsilon)^2} \label{eq:ggg}
    \end{align}
    % \end{small}
    Given $\beta_1=0.9$, $\beta_2=0.999$ and $\epsilon$ is a very small value that can be ignored, considering both large or small $r$, we approximate:
    \begin{align}
        (V + \epsilon) &\approx \alpha g_r
    \end{align}
    where $\alpha$ is a non-zero scaling variant. In the end, we have:
    \begin{align}
        \frac{\partial{\hat{g}_r}}{\partial g_r} &= \frac{-\beta_1 m_{r-1}}{\alpha (1-\beta_1^r)} \cdot \frac{1}{g_r^2} \label{eq:gggg}
    \end{align}
    See Appendix for more specific derivation process.

    In \eqref{eq:gggg}, the parameters $\alpha$ and $(1-\beta_1)$ are non-zero values, while $m_{r-1}$ is exclusively preserved within the confines of local clients. This specific configuration ensures that the original local gradients, $g_r$, are safeguarded against deep leakage from gradients by potential malicious attackers. Consequently, the proposed gradients stand-in, $\hat{g}_r$, effectively obstructs any attempts to deduce or reconstruct private training data from the local gradients. This safeguarding mechanism enhances the security of the data during the \ac{FL} training process, providing a robust defense against potential data privacy breaches.
\end{proof}

%----------------------------------------------------------------------
\subsection{AdaDefense in \ac{FL}}

\noindent\textbf{Federated Learning:}
The foundational method underpinning recent \ac{FL} techniques assumes the presence of multiple clients, represented as $\mathcal{C}$, each possessing their own local datasets $\mathcal{D}$. The learning task is defined by a model $\mathcal{F}$ with parameters $\omega$. The local gradient $g_i$ for each client $i$ is computed as follows:
\begin{align}
    g_i = \frac{1}{||d_i||} \nabla_\omega \sum_j \mathcal{L}(\mathcal{F}(x^{(j)}; \omega), y^{(j)}) \quad \text{for all } i \in \mathcal{C},
\end{align}
where $||\cdot||$ denotes the L2 norm, used here to normalize the gradients by the size of the dataset $d_i$. In this formula, $x^{(j)}$ and $y^{(j)}$ are the input features and corresponding labels of the $j$-th example in the $i$-th local dataset.

After computing the local gradients, the server aggregates these gradients and performs an averaging process to update the global model parameters $\omega_{r}$. This process is mathematically represented as:
\begin{align}
    \omega_{r} = \omega_{r-1} - \frac{1}{|\mathcal{C}|} \sum_{i=1}^{|\mathcal{C}|} g_i,
\end{align}
where $|\mathcal{C}|$ denotes the total number of clients. It is assumed that all local datasets are of equal size, i.e., $||d_i|| = ||d_k||$ for any $d_i, d_k \in \mathcal{D}$. This assumption simplifies the model updating process by maintaining uniform influence for each client's gradient.

\noindent\textbf{AdaDefense:}
Emphasizing the innovative nature of AdaDefense is essential, as it integrates seamlessly into existing frameworks without altering the fundamental architecture of the model or the \ac{FL} strategy it supports. AdaDefense is designed to act as an unobtrusive extension, focusing on the manipulation of local gradients. This process involves the creation of gradients stand-in that is used in the global aggregation phase. In a standard \ac{FL} setup, the global model is initially crafted on a central server and then distributed to various clients. These clients perform secure local training and then send their local gradients stand-ins back to the server. At the server, these gradients stand-ins are aggregated to update and refine the global model. This cycle of local training and global updating repeats, with the server coordinating the aggregation of gradients to produce each new iteration of the global model. During this process, the server remains oblivious to all clients' first-order and second-order moments information. This omission is intentional and essential for security. This structure ensures that gradients stand-in cannot be exploited to reconstruct private, local training data. By maintaining this separation, AdaDefense strengthens the \ac{FL} system against potential data reconstruction attacks. Consequently, in every round of local training that follows, each client adjusts its model based on the updated global model distributed by the server, thus advancing the collective learning process without compromising on privacy or security. See Fig.~\ref{fig:ad_fl} for an illustration of AdaDefense in \ac{FL}.

\begin{figure*}[t!]
    \centering
    \includegraphics[width=0.9\linewidth]{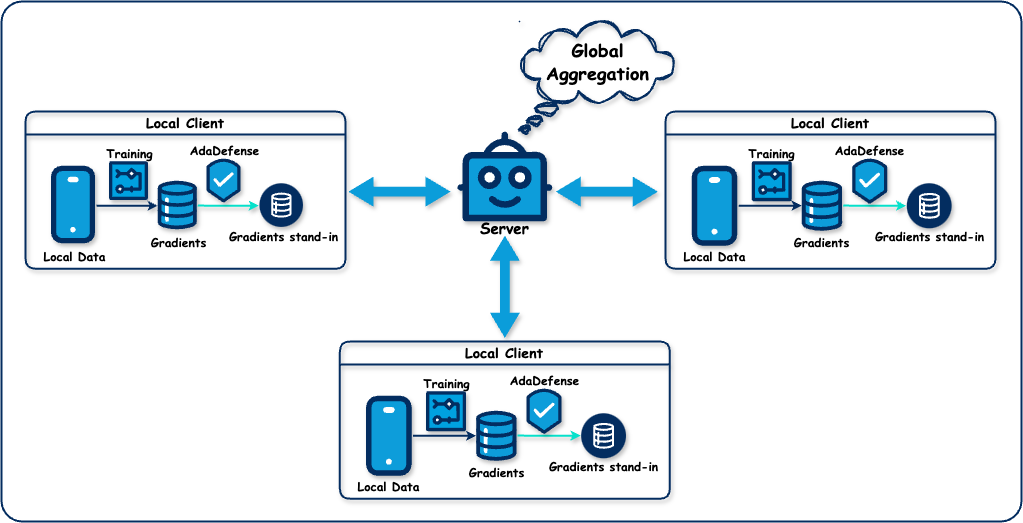}
    \caption{Illustration of AdaDefense in \ac{FL}}
    \label{fig:ad_fl}
\end{figure*}

%=====================================================================
\section{Experiments}
\label{sec:er}

In this section, we begin by describing the datasets and metrics used for benchmark evaluation. We then detail a series of experiments designed to evaluate the performance of \ac{DNN}s across two aspects. The first set of experiments investigates the impact of using the proposed gradients stand-in on the prediction accuracy. The second set assesses the effectiveness of this approach in defending against state-of-the-art gradient leakage attacks, specifically those involving \ac{GRNN}, \ac{IG} and \ac{GLAUS}.
%-----------------------------------------------------------------------
\subsection{Benchmarks and Metrics}

In our study, we conducted experiments using four well-known public datasets: MNIST~\cite{lecun1998mnist}, CIFAR-10~\cite{krizhevsky2009learning}, CIFAR-100~\cite{krizhevsky2009learning}, and ILSVRC2012~\cite{deng2009imagenet}. To assess the quality of images generated by our method, we employed four evaluation metrics: \ac{MSE}, \ac{PSNR}, \ac{LPIPS}~\cite{zhang2018unreasonable}, and \ac{SSIM}~\cite{wang2004image}. To optimize the conditions for gradient leakage attacks and effectively evaluate the defense capabilities, we set the batch size to one. This configuration allows for a direct comparison between the reconstructed image and the original image. The \ac{MSE} metric quantifies the pixel-wise L2 norm difference between the reconstructed image $\hat{X}$ and the original image $X$, defined as $MSE(X, \hat{X}) = ||X - \hat{X}||_2$. \ac{PSNR}, another criterion for image quality assessment, is calculated using the formula $PSNR(X, \hat{X}) = 10 \cdot \lg(\frac{255^2}{MSE(X, \hat{X})})$, where a higher \ac{PSNR} value indicates a higher similarity between the two images. For perceptual similarity, we utilized the \ac{LPIPS} metric, which measures the similarity between two image patches using pre-trained networks such as \emph{VGGNet}~\cite{simonyan2014very} and \emph{AlexNet}~\cite{krizhevsky2012imagenet}. This metric aligns with human visual perception, where a lower \ac{LPIPS} score indicates greater perceptual similarity between the compared images. The \ac{SSIM} index evaluates the change in image structure, where a higher \ac{SSIM} value suggests less distortion and thus a better quality of the reconstructed image. All neural network models were implemented using the PyTorch framework~\cite{paszke2019pytorch}, ensuring robust and efficient computation.
%-----------------------------------------------------------------------
\subsection{Model Performance}

\begin{table*}[!ht]
\setlength{\tabcolsep}{8pt}
\begin{center}
\caption{Testing accuracy (\%) of models trained under various configurations. The highest accuracy achieved is highlighted in red, while the second highest is denoted in blue. Here, ``C-10" and ``C-100" refer to CIFAR-10 and CIFAR-100 datasets, respectively. ``AD" stands for the AdaDefense module. The accuracy figures listed in the `Reference' row are sourced from corresponding papers.}
\label{tab:accuracy}
\begin{tabular}{c c|c|c c|c c|c c|c c|c c}
\hline
\multicolumn{2}{c|}{\textbf{Model}} & \makecell[c]{\emph{LeNet}\\(32*32)} & \multicolumn{2}{c|}{\makecell[c]{\emph{ResNet-20}\\(32*32)}} & \multicolumn{2}{c|}{\makecell[c]{\emph{ResNet-32}\\(32*32)}} & \multicolumn{2}{c|}{\makecell[c]{\emph{ResNet-18}\\(224*224)}} & \multicolumn{2}{c|}{\makecell[c]{\emph{ResNet-34}\\(224*224)}} & \multicolumn{2}{c}{\makecell[c]{\emph{VGG-16}\\(224*224)}} \\
\hline
\multicolumn{2}{c|}{\textbf{Dataset}} & MNIST & C-10 & C-100 & C-10 & C-100 & C-10 & C-100 & C-10 & C-100 & C-10 & C-100 \\
\hline
\textbf{Centralized} && 98.09 & {\color{red}91.63} & {\color{red}67.59} & {\color{blue}92.34} & {\color{red}70.35} & {\color{red}91.62} & {\color{red}72.15} & {\color{red}92.20} & {\color{red}73.21} & {\color{red}89.13} & {\color{red}63.23} \\
\hline
\textbf{FedAvg} && {\color{blue}98.14} & 91.20 & 58.58 & 91.37 & 61.91 & {\color{blue}89.50} & 68.59 & 89.27 & 68.30 & {\color{blue}88.13} & {\color{blue}61.91} \\
\hline
\textbf{FedAvg w/ AD} && 97.80 & 89.15 & {\color{blue}61.81} & 90.20 & {\color{blue}63.64} & 89.29 & {\color{blue}68.76} & {\color{blue}89.38} & {\color{blue}68.38} & 87.13 & 58.32 \\
\hline
\textbf{Reference} && \makecell[c]{{\color{red}99.05}\\\cite{lecun1998gradient}} & \makecell[c]{{\color{blue}91.25}\\\cite{he2016deep}} & - & \makecell[c]{{\color{red}92.49}\\\cite{he2016deep}} & - & - & - & - & - & - & - \\
\hline
\end{tabular}
\end{center}
\end{table*}

To evaluate the impact on performance, we developed several distinct training methodologies. The first approach involved centralized training. The second approach was collaborative training within the \ac{FedAvg} framework. The last the approach was also the \ac{FedAvg}, but involved gradients stand-in. For the training of image classification models, we selected six foundational network architectures: \emph{LeNet}~\cite{lecun1998gradient}, \emph{ResNet-20}, \emph{ResNet-32}, \emph{ResNet-18}, \emph{ResNet-34}, ~\cite{he2016deep} and \emph{VGGNet-16}~\cite{simonyan2014very} on different benchmarks including MNIST, CIFAR-10 and CIFAR-100 with image resolutions from $32*32$ to $224*224$. We systematically trained a total of thirty-three models across eleven experimental setups using both centralized and federated training strategies. This comprehensive approach allowed us to compare the performance of standard networks against those with gradients stand-in. The results of these comparisons are systematically detailed in Table~\ref{tab:accuracy}, providing a clear overview of the impact of the gradients stand-in on model performance in different training environments.

Centralized training consistently produces the most favorable results. The models appear to have a high performance on the MNIST and CIFAR-10 datasets, with accuracy slightly decreasing for the CIFAR-100 dataset. This pattern suggests that CIFAR-100 has increased complexity, due to a larger number of classes, poses a greater challenge for the models. For example, the implementation of \emph{LeNet} on the MNIST dataset results in a remarkable accuracy of $99.05\%$, as reported in~\cite{lecun1998gradient}. Similarly, \emph{ResNet-20}, \emph{ResNet-32}, \emph{ResNet-18}, \emph{ResNet-34} and \emph{VGG-16} achieves their highest accuracies on CIFAR-10 and CIFAR-100 datasets, recording $91.63\%$ and $67.59\%$, $92.49\%$ and $70.35\%$, $91.62\%$ and $72.15\%$, $92.20\%$ and $73.21\%$, $89.13\%$ and $63.23\%$, respectively.
When employing \ac{FedAvg}, there is a noticeable trend where most models maintain a performance close to the centralized training, which showcases the effectiveness of \ac{FL} in preserving model accuracy even when the data is decentralized. \ac{FedAvg} gains four out of eleven second highest accuracies for \emph{LeNet} on MNIST, \emph{ResNet-18} on CIFAR-10, \emph{VGG-16} on CIFAR-10 and CIFAR-100. The incorporation of the gradients stand-in into the training framework generally does not adversely affect the model's inference performance. In some cases, models equipped with the gradients stand-in even surpass the performance of \ac{FedAvg}, achieving five out of eleven second highest accuracies. Within the AdaDefense framework, \emph{ResNet-20} on CIFAR-100 achieves a testing accuracy of $61.81\%$, which is significantly higher by $5.51\%$ than the conventional \ac{FedAvg} model, which only reaches $58.58\%$.

%-----------------------------------------------------------------------
\subsection{Defense Performance}

In evaluating the robustness of defense mechanisms against gradient leakage attacks, our study incorporated four advanced attack methodologies. These included \ac{GRNN}~\cite{ren2022grnn}, \ac{IG}~\cite{geiping2020inverting} and \ac{GLAUS}~\cite{yang2023reveal}. For the datasets MNIST, CIFAR-10, and CIFAR-100, experiments were conducted using a batch size of one and an image resolution of $32 * 32$. However, for the ILSVRC2012 dataset, a higher resolution of $256 * 256$ was selected rather than the usual $224 * 224$. This adjustment was necessary because \ac{GRNN} can only process image resolutions that are powers of two. To accommodate this requirement, images of lower resolutions were upscaled through linear interpolation to match the required dimensions. This approach ensures that the integrity and comparability of the results across different datasets and resolution settings are maintained.

\begin{table*}[!ht]
\setlength{\tabcolsep}{5pt}
\begin{center}
\caption{Comparison of image reconstruction using \ac{GRNN} and \ac{IG} without and with AdaDefense.}
\label{tab:dp}
\begin{tabular}{c | c | c c c | c c c | c c c }
\hline
\multicolumn{2}{c|}{Model} & \multicolumn{3}{c|}{\makecell[c]{\emph{LeNet}\\(32*32)}} & \multicolumn{3}{c|}{\makecell[c]{\emph{ResNet-20}\\(32*32)}} & \multicolumn{3}{c}{\makecell[c]{\emph{ResNet-18}\\(256*256)}} \\ 
\hline
\multicolumn{2}{c|}{Dataset} & MNIST & C-10 & C-100 & MNIST & C-10 & C-100 & C-10 & C-100 & ILSVRC \\
\hline
\multirow{13}{*}{GRNN} & True & \makecell*[c]{\includegraphics[width=0.07\linewidth]{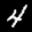}} & \makecell*[c]{\includegraphics[width=0.07\linewidth]{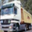}} & \makecell*[c]{\includegraphics[width=0.07\linewidth]{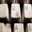}} & \makecell*[c]{\includegraphics[width=0.07\linewidth]{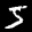}} & \makecell*[c]{\includegraphics[width=0.07\linewidth]{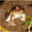}} & \makecell*[c]{\includegraphics[width=0.07\linewidth]{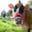}} & \makecell*[c]{\includegraphics[width=0.07\linewidth]{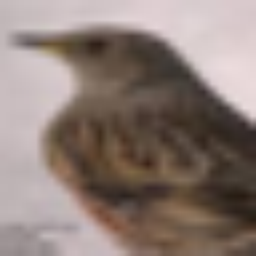}} & \makecell*[c]{\includegraphics[width=0.07\linewidth]{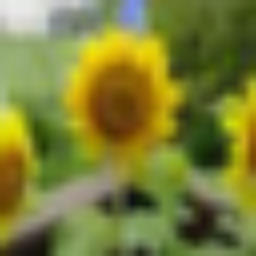}} & \makecell*[c]{\includegraphics[width=0.07\linewidth]{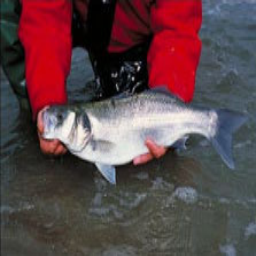}}\\
&     & four & truck & keyboard & five & frog & cattle & bird & sunflower & tench\\

& w/o & \makecell*[c]{\includegraphics[width=0.07\linewidth]{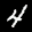}} & \makecell*[c]{\includegraphics[width=0.07\linewidth]{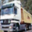}} & \makecell*[c]{\includegraphics[width=0.07\linewidth]{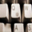}} & \makecell*[c]{\includegraphics[width=0.07\linewidth]{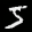}} & \makecell*[c]{\includegraphics[width=0.07\linewidth]{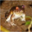}} & \makecell*[c]{\includegraphics[width=0.07\linewidth]{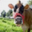}} & \makecell*[c]{\includegraphics[width=0.07\linewidth]{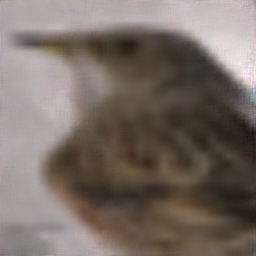}} & \makecell*[c]{\includegraphics[width=0.07\linewidth]{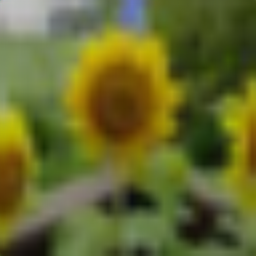}} & \makecell*[c]{\includegraphics[width=0.07\linewidth]{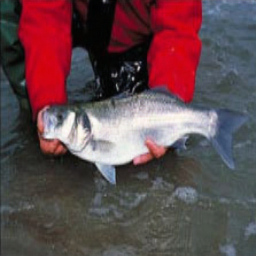}} \\
&     & four & truck & keyboard & five & frog & cattle & bird & sunflower & tench\\
& w/  & \makecell*[c]{\includegraphics[width=0.07\linewidth]{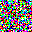}} & \makecell*[c]{\includegraphics[width=0.07\linewidth]{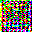}} & \makecell*[c]{\includegraphics[width=0.07\linewidth]{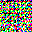}} & \makecell*[c]{\includegraphics[width=0.07\linewidth]{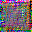}} & \makecell*[c]{\includegraphics[width=0.07\linewidth]{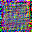}} & \makecell*[c]{\includegraphics[width=0.07\linewidth]{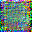}} & \makecell*[c]{\includegraphics[width=0.07\linewidth]{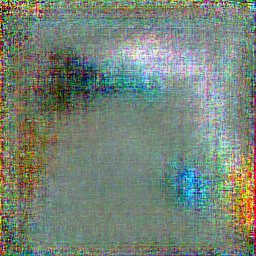}} & \makecell*[c]{\includegraphics[width=0.07\linewidth]{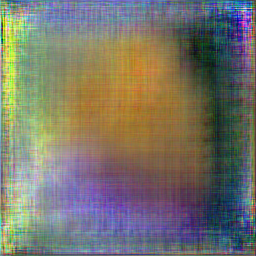}} & \makecell*[c]{\includegraphics[width=0.07\linewidth]{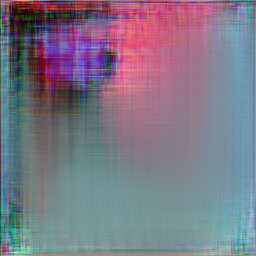}}\\
&     & zero & ship & bowl & four & horse & mushroom & dog & oak tree & soccer ball\\

\hline
\multirow{13}{*}{IG} & True & \makecell*[c]{\includegraphics[width=0.07\linewidth]{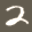}} & \makecell*[c]{\includegraphics[width=0.07\linewidth]{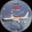}} & \makecell*[c]{\includegraphics[width=0.07\linewidth]{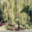}} & \makecell*[c]{\includegraphics[width=0.07\linewidth]{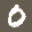}} & \makecell*[c]{\includegraphics[width=0.07\linewidth]{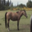}} & \makecell*[c]{\includegraphics[width=0.07\linewidth]{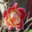}} & \makecell*[c]{\includegraphics[width=0.07\linewidth]{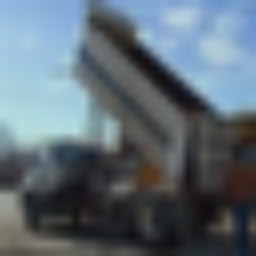}} & \makecell*[c]{\includegraphics[width=0.07\linewidth]{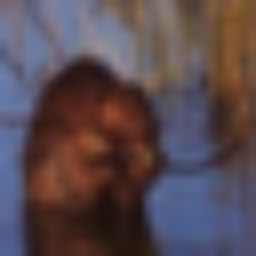}} & \makecell*[c]{\includegraphics[width=0.07\linewidth]{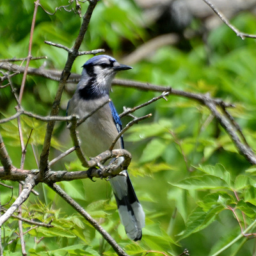}} \\
&     & two & airplane & willow & zero & horse & rose & truck & beaver & jay\\
& w/o & \makecell*[c]{\includegraphics[width=0.07\linewidth]{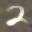}} & \makecell*[c]{\includegraphics[width=0.07\linewidth]{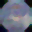}} & \makecell*[c]{\includegraphics[width=0.07\linewidth]{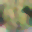}} & \makecell*[c]{\includegraphics[width=0.07\linewidth]{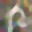}} & \makecell*[c]{\includegraphics[width=0.07\linewidth]{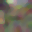}} & \makecell*[c]{\includegraphics[width=0.07\linewidth]{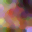}} & \makecell*[c]{\includegraphics[width=0.07\linewidth]{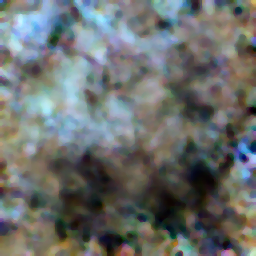}} & \makecell*[c]{\includegraphics[width=0.07\linewidth]{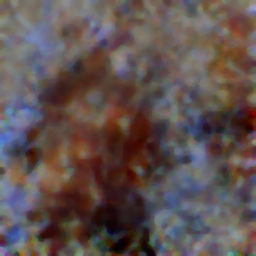}} & \makecell*[c]{\includegraphics[width=0.07\linewidth]{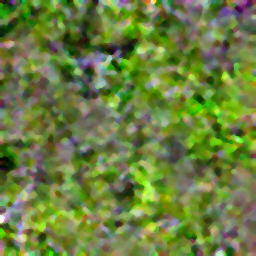}} \\
&     & - & - & - & - & - & - & - & - & -\\
& w/  & \makecell*[c]{\includegraphics[width=0.07\linewidth]{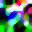}} & \makecell*[c]{\includegraphics[width=0.07\linewidth]{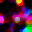}} & \makecell*[c]{\includegraphics[width=0.07\linewidth]{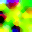}} & \makecell*[c]{\includegraphics[width=0.07\linewidth]{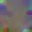}} & \makecell*[c]{\includegraphics[width=0.07\linewidth]{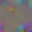}} & \makecell*[c]{\includegraphics[width=0.07\linewidth]{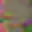}} & \makecell*[c]{\includegraphics[width=0.07\linewidth]{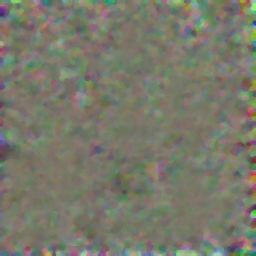}} & \makecell*[c]{\includegraphics[width=0.07\linewidth]{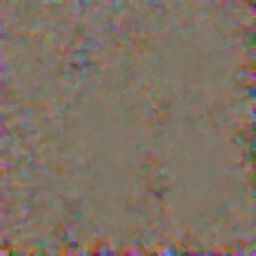}} & \makecell*[c]{\includegraphics[width=0.07\linewidth]{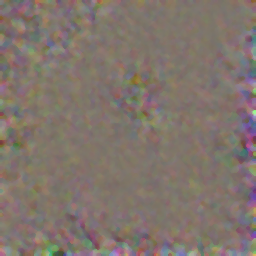}}\\
&     & - & - & - & - & - & - & - & - & -\\
\hline
\end{tabular}
\end{center}
\vspace{-7pt}
\end{table*}

\begin{table}[ht]
\setlength{\tabcolsep}{3pt}
\begin{center}
\caption{Image reconstruction for \ac{GLAUS} without and with AdaDefense.}
\label{tab:dp-glaus}
\begin{tabular}{c | c | c c c c c}
\hline
\multicolumn{2}{c|}{Model} & \multicolumn{5}{c}{\makecell[c]{\emph{MNIST-CNN}\\(28*28)}} \\ 
\hline
\multicolumn{2}{c|}{Dataset} & \multicolumn{5}{c}{MNIST}\\
\hline
\multirow{13}{*}{GLAUS} & True & \makecell*[c]{\includegraphics[width=0.13\linewidth]{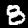}} & \makecell*[c]{\includegraphics[width=0.13\linewidth]{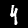}} & \makecell*[c]{\includegraphics[width=0.13\linewidth]{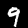}} & \makecell*[c]{\includegraphics[width=0.13\linewidth]{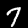}} & \makecell*[c]{\includegraphics[width=0.13\linewidth]{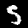}} \\
&     & eight & four & nine & seven & five\\
& w/o & \makecell*[c]{\includegraphics[width=0.13\linewidth]{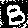}} & \makecell*[c]{\includegraphics[width=0.13\linewidth]{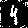}} & \makecell*[c]{\includegraphics[width=0.13\linewidth]{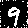}} & \makecell*[c]{\includegraphics[width=0.13\linewidth]{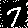}} & \makecell*[c]{\includegraphics[width=0.13\linewidth]{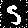}} \\
&     & eight & four & nine & seven & five\\
& w/  & \makecell*[c]{\includegraphics[width=0.13\linewidth]{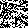}} & \makecell*[c]{\includegraphics[width=0.13\linewidth]{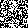}} & \makecell*[c]{\includegraphics[width=0.13\linewidth]{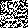}} & \makecell*[c]{\includegraphics[width=0.13\linewidth]{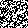}} & \makecell*[c]{\includegraphics[width=0.13\linewidth]{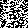}} \\
&     & five & five & five & five & five\\
\hline
\end{tabular}
\end{center}
\vspace{-7pt}
\end{table}

In Table~\ref{tab:dp}, we present qualitative results that demonstrate the comparative effectiveness of various defense mechanisms against targeted leakage attacks. These defenses include \ac{GRNN} and \ac{IG}, which have been implemented across different backbone networks and benchmark datasets. The table provides a clear view of how each defense strategy performs, allowing for an assessment of their efficacy in a controlled environment. 

As discussed in the paper~\cite{ren2022grnn}, \ac{DLG} is inadequate in several experimental scenarios, leading us to exclude its results from our analysis. Instead, we focused on \ac{GRNN}, which has proven more effective. Our proposed AdaDefense that efficiently safeguards against the leakage of true images during gradient-based reconstruction processes. In our experiments using \ac{GRNN} with a \emph{ResNet-18} model at an image resolution of $256 * 256$, the reconstructed images retained only minimal and non-identifiable details from the original images. This indicates a significant protection against the exposure of private data. For other architectures, such as \emph{LeNet} and \emph{ResNet-20}, the results were even more promising, with the reconstructed images being completely unrecognizable and distorted. On the other hand, the \ac{GRNN} is capable of revealing true labels from gradients. Consequently, we have included these labels in Table~\ref{tab:dp} for comprehensive analysis. It is evident from our observations that when models are trained using our AdaDefense module, no correct labels are inferred from gradients. This further indicates the effectiveness of AdaDefense in enhancing model security against gradient-based label inference attacks.

Furthermore, we compared the reconstruction capabilities of \ac{IG} with those of \ac{GRNN}. The findings confirmed that \ac{GRNN} outperforms \ac{IG} in safeguarding private training data across all tested configurations. Overall, our AdaDefense consistently demonstrated robust protection of private training information from gradient-based reconstructions in our experiments.

In the recent publication on the method \ac{GLAUS}, it is noted that the official code supports only a specially designed architecture, \emph{MNIST-CNN}, for use with the MNIST dataset. Attempts to adapt this method for other neural networks and datasets were unsuccessful. Consequently, the results presented in Table~\ref{tab:dp-glaus} are based solely on experiments conducted using the original, unmodified official code. This table highlights certain inherent shortcomings of \ac{GLAUS}, even in the absence of our defense method. These limitations are generally considered acceptable, given that \ac{GLAUS} is specifically tailored to attack \ac{UGS} secure aggregation methods within the \ac{FL} framework. Notably, when our AdaDefense module is integrated into the system, \ac{GLAUS} is rendered ineffective in all trials concerning image reconstruction and label inference.

%-----------------------------------------------------------------------
\subsection{Quantitative Analysis}

\begin{table*}[!ht]
\setlength{\tabcolsep}{8pt}
\begin{center}
\caption{Quantitative comparison of \ac{GRNN} and \ac{IG} without and with AdaDefense module. The results are computed from the generated images and their associated true images.}
\label{tab:qc}
\begin{tabular}{c c c c >{\columncolor{blue!10}}c c >{\columncolor{red!10}}c c >{\columncolor{blue!10}}c c >{\columncolor{red!10}}c c >{\columncolor{blue!10}}c}
\hline
\multirow{2}{*}{\textbf{Method}} & \multirow{2}{*}{\textbf{Model}} & \multirow{2}{*}{\textbf{Dataset}} & \multicolumn{2}{c}{MSE$\downarrow$} & \multicolumn{2}{c}{PSNR$\uparrow$} & \multicolumn{2}{c}{LPIPS-V$\downarrow$} & \multicolumn{2}{c}{LPIPS-A$\downarrow$} & \multicolumn{2}{c}{SSIM$\uparrow$} \\
\cmidrule(r){4-5}\cmidrule(r){6-7}\cmidrule(r){8-9}\cmidrule(r){10-11}\cmidrule(r){12-13}
&&& w/o & w/ & w/o & w/ & w/o & w/ & w/o & w/ & w/o & w/ \\
\hline
\multirow{9}{*}{\textbf{GRNN}} & \multirow{3}{*}{\makecell[c]{\emph{LeNet}\\(32*32)}}
& MNIST & 0.65 & 173.44 & 50.11 & 25.74 & 0.05 & 0.75 & 0.01 & 0.56 & 1.000 & -0.003\\
&& C-10 & 1.97 & 143.19 & 45.28 & 26.57 & 0.00 & 0.56 & 0.00 & 0.35 & 0.997 & 0.000\\
&& C-100 & 1.79 & 145.36 & 45.85 & 26.51 & 0.00 & 0.55 & 0.00 & 0.38 & 0.998 & 0.003\\
\cline{3-13}
& \multirow{3}{*}{\makecell[c]{\emph{ResNet-20}\\(32*32)}}
& MNIST & 3.55 & 125.81 & 44.04 & 27.14 & 0.12 & 0.70 & 0.01 & 0.47 & 0.954 & 0.003\\
&& C-10 & 12.16 & 85.43 & 40.68 & 28.84 & 0.00 & 0.44 & 0.00 & 0.25 & 0.973 & 0.030\\
&& C-100 & 22.30 & 86.43 & 37.37 & 28.84 & 0.00 & 0.43 & 0.00 & 0.24 & 0.958 & 0.053\\
\cline{3-13}
& \multirow{3}{*}{\makecell[c]{\emph{ResNet-18}\\(256*256)}}
& C-10 & 17.36 & 70.76 & 37.55 & 29.69 & 0.01 & 0.43 & 0.01 & 0.57 & 0.955 & 0.151 \\
&& C-100 & 32.56 & 64.37 & 34.67 & 30.19 & 0.02 & 0.31 & 0.02 & 0.28 & 0.917 & 0.434 \\
&& ILSVRC & 18.82 & 59.81 & 37.31 & 30.41 & 0.05 & 0.41 & 0.04 & 0.39 & 0.891 & 0.212 \\
\hline
\multirow{9}{*}{\textbf{IG}} & \multirow{3}{*}{\makecell[c]{\emph{LeNet}\\(32*32)}}
& MNIST & 19.23 & 90.62 & 35.29 & 28.56 & 0.06 & 0.50 & 0.03 & 0.32 & 0.825 & 0.300 \\
&& C-10 & 22.16 & 101.09 & 34.68 & 28.08 & 0.18 & 0.63 & 0.06 & 0.30 & 0.637 & 0.060 \\
&& C-100 & 20.45 & 92.32 & 35.02 & 28.48 & 0.12 & 0.53 & 0.06 & 0.31 & 0.623 & 0.100 \\
\cline{3-13}
& \multirow{3}{*}{\makecell[c]{\emph{ResNet-20}\\(32*32)}}
& MNIST & 49.25 & 60.81 & 31.23 & 30.30 & 0.22 & 0.27 & 0.15 & 0.24 & 0.361 & 0.125 \\
&& C-10 & 39.35 & 52.31 & 32.27 & 30.99 & 0.21 & 0.27 & 0.16 & 0.16 & 0.299 & 0.098 \\
&& C-100 & 38.19 & 47.25 & 32.37 & 31.41 & 0.22 & 0.31 & 0.10 & 0.20 & 0.309 & 0.110 \\
\cline{3-13}
& \multirow{3}{*}{\makecell[c]{\emph{ResNet-18}\\(256*256)}}
& C-10 & 64.38 & 45.54 & 30.11 & 32.08 & 0.35 & 0.32 & 0.22 & 0.21 & 0.325 & 0.476 \\
&& C-100 & 78.44 & 63.13 & 29.29 & 30.19 & 0.37 & 0.28 & 0.30 & 0.27 & 0.355 & 0.510 \\
&& ILSVRC & 82.46 & 79.16 & 28.99 & 29.20 & 0.64 & 0.49 & 0.56 & 0.48 & 0.103 & 0.266 \\
\hline
\end{tabular}
\end{center}
\vspace{-10pt}
\end{table*}

To conduct a quantitative assessment of our proposed AdaDefense module in comparison to existing state-of-the-art methods for mitigating gradient leakage, we utilized four distinct evaluation metrics: \ac{MSE}, \ac{PSNR}, \ac{LPIPS} with both \emph{VGGNet} and \emph{AlexNet}, and \ac{SSIM}. The evaluation results, as detailed in Table~\ref{tab:qc}, were derived from comparisons between generated images and their respective original images. Our analysis consistently revealed that images produced by models incorporating the AdaDefense module exhibited significantly lower degrees of similarity when compared to those from models lacking this feature. This outcome demonstrates the efficacy of the AdaDefense module in providing robust defense against gradient leakage attacks.

To enhance the clarity of our analysis on the defensive capabilities of AdaDefense, we have concentrated our examination primarily on \ac{GRNN}. This focus is driven by our observations that \ac{GRNN} exhibits a significantly stronger attack capability compared to \ac{IG}. Through this targeted analysis, we aim to demonstrate more effectively the robustness of AdaDefense against more potent threats. Starting with \ac{MSE}, we see that in every case, the introduction of AdaDefense results in a dramatic increase in \ac{MSE}, which implies a higher error rate between generated and true images when the defense is active. For instance, the \ac{MSE} for the \emph{LeNet} model on the MNIST dataset jumps from $0.65$ without AdaDefense to $173.44$ with AdaDefense, representing an astonishing increase of approximately $26,600\%$. This trend of increased error rates with AdaDefense is consistent across all models and datasets, highlighting a significant protection of using AdaDefense in terms of gradients leakage.  Conversely, the impact on \ac{PSNR}, which measures image quality (higher is better), shows a general decline when AdaDefense is applied. For example, \ac{PSNR} for the \emph{ResNet-18} model on the ILSVRC dataset decreases from $37.31$ to $30.41$, a decrease of about $18.5\%$.
\ac{LPIPS}, evaluated using both \emph{VGGNet} and \emph{AlexNet}, measures perceptual similarity (lower is better). All the results from \ac{GRNN} perform significantly worse when the AdaDefense module is adapted into the model. Lastly, the \ac{SSIM} metric, which evaluates the structural similarity between the generated and true images, mostly shows a decline with AdaDefense. For the \emph{ResNet-18} model on the CIFAR-10 dataset, \ac{SSIM} decreases significantly from $0.955$ to $0.151$, about a $84.18\%$ reduction. This supports the view that AdaDefense can protect against specific vulnerabilities.
%=====================================================================
\section{Conclusion}
\label{sec:cc}

In this study, we introduce, AdaDefense, a novel defense mechanism against gradient leakage tailored for the \ac{FL} framework. The gist of AdaDefense is to use Adam to generate local gradients stand-ins for global aggregation, which effectively prevents the leakage of input data information through the local gradients. We provide a theoretical demonstration of how the gradients stand-in secures input data within the gradients and further validate its effectiveness through extensive experimentation with various advanced gradient leakage techniques. Our results, both quantitative and qualitative, indicate that integrating the proposed method into the model precludes the possibility of reconstructing input images from the publicly shared gradients. Furthermore, we evaluate the impact of the defense plugin on the model's classification accuracy, ensuring that the defense mechanism does not detract from the model’s performance. We are excited to make the implementation of AdaDefense available for further research and application. Moving forward, our focus will shift to exploring the intricate relationships between the gradients of different layers and their connection to the input data, which promises to enhance the robustness of \ac{FL} systems against various attack vectors.
%=====================================================================

\section*{Contribution Statement}

Yi Hu: Conceptualization, Formal analysis, Investigation, Methodology, Writing – original draft, Writing – review \& editing. 
Hanchi Ren: Conceptualization, Formal analysis, Investigation, Methodology, Supervision, Writing – review \& editing. 
Chen Hu: Writing – review. 
Yiming Li: Writing – review. 
Jingjing Deng: Conceptualization, Supervision, Writing – review. 
Xianghua Xie: Supervision

% \section*{Acknowledgment}
% This work was supported by the Engineering and Physical Sciences Research Council (EPSRC) UK, Data Release - Trust, Identity, Privacy and Security [Grant Number: EP/N028139/1] and Key Research and Development Program of Shaanxi (Program No. 2021ZDLGY02-02).

%=====================================================================
\bibliographystyle{IEEEtran}
% argument is your BibTeX string definitions and bibliography database(s)
\bibliography{ref}

%=====================================================================
\newpage
% \newpage
\appendix
\label{app:a}

%-----------------------------------------------------------------------
% \noindent\textbf{Derivation process:}

% Regarding the \eqref{eq:ggg}, given \eqref{eq:m}, \eqref{eq:v}, \eqref{eq:mm}, \eqref{eq:vv} and \eqref{eq:dgg}, we have:
% \begin{strip}
%     \begin{align}
%         \frac{\partial{\hat{g}_r}}{\partial g_r} = \frac{(1 - \beta_1)(\sqrt{\frac{\beta_2 v_{r-1} + (1 - \beta_2) g_r^2}{1 - \beta_2^r}} + \epsilon) - (\beta_1 m_{r-1} + (1 - \beta_1) g_r) \cdot \frac{(1 - \beta_2)g_r}{\sqrt{\beta_2 v_{r-1} + (1 - \beta_2) g_r^2}}}{(1 - \beta_1^r)(\sqrt{\frac{\beta_2 v_{r-1} + (1 - \beta_2) g_r^2}{1 - \beta_2^r}} + \epsilon)^2}
%     \end{align}
% \end{strip}

% We simplify the numerator first. Given \eqref{eq:vvv}, the numerator will be:
% \begin{align}
%     &(1-\beta_1)(V + \epsilon) - (\beta_1 m_{r-1} + (1 - \beta_1)g_r) \cdot \frac{(1-\beta_2)g_r}{V \sqrt{1-\beta_2^r}} \nonumber \\
%     &= (1-\beta_1)(V + \epsilon) - (\frac{(1-\beta_2)(\beta_1 m_{r-1} + (1-\beta_1)g_r)}{V\sqrt{1-\beta_2^r}})g_r \nonumber
% \end{align}
% Now we have \eqref{eq:ggg}:
% \begin{align}
%     \frac{\partial{\hat{g}_r}}{\partial g_r} &= \frac{(1-\beta_1)(V+\epsilon) - (\frac{(1-\beta_2)\beta_1 m_{r-1} + (1-\beta_2)(1-\beta_1)g_r}{V\sqrt{1-\beta_2^r}})g_r}{(1-\beta_1^r)(V+\epsilon)^2} \nonumber
% \end{align}

%-----------------------------------------------------------------------
\noindent\textbf{Proof of the Approximation:}

In the context of simplifying the derivative $\frac{\partial{\hat{g}_r}}{\partial g_r}$ and considering whether $V$ can be approximated by $\alpha g_r$, we need to analyze the terms carefully.

\begin{claim}
    For large values of $r$, the model has undergone sufficient training to approach convergence. At this stage, $V$ can be closely approximated by $\alpha g_r$, which simplifies the expression for the derivative $\frac{\partial \hat{g}_r}{\partial g_r}$.
\end{claim}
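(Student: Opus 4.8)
The plan is to show that, once $r$ is large, $V=\sqrt{\hat v_r}$ of \eqref{eq:vvv} collapses coordinatewise to a nonzero multiple of $g_r$, and then to feed this back into the exact derivative \eqref{eq:ggg} to reach \eqref{eq:gggg}. Unrolling the second-moment recursion \eqref{eq:v} from $v_0=0$ expresses $v_{r-1}=(1-\beta_2)\sum_{k=1}^{r-1}\beta_2^{r-1-k}g_k^2$ as a weighted average of past squared gradients whose weights sum to $1-\beta_2^{r-1}$ and decay geometrically backward in time. The content of ``large $r$'' is precisely that, by then, training has settled into a near-stationary regime: the recent per-coordinate squared gradients --- which carry essentially all of the weight --- are close to the current value $g_r^2$ (they are small and no longer changing appreciably), and the exponentially down-weighted early terms are negligible. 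Hence $v_{r-1}\approx(1-\beta_2^{r-1})g_r^2$, while at the same time $1-\beta_2^r\to1$.

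Substituting these two facts into \eqref{eq:vvv} and using $\beta_2(1-\beta_2^{r-1})+(1-\beta_2)=1-\beta_2^r$,
\begin{align*}
    V &=\sqrt{\frac{\beta_2 v_{r-1}+(1-\beta_2)g_r^2}{1-\beta_2^r}}\\
    &\approx\sqrt{\frac{(1-\beta_2^r)\,g_r^2}{1-\beta_2^r}}=|g_r| ,
\end{align*}
with all operations understood coordinatewise, so $V\approx\alpha g_r$ where $\alpha_i=V_i/g_{r,i}\,(\approx\operatorname{sign}(g_{r,i}))$ is nonzero whenever $g_{r,i}\neq0$ --- exactly the ``nonzero scaling variant'' named in the claim. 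For the closing remark, inserting $V\approx\alpha g_r$ and $\epsilon\approx0$ into \eqref{eq:ggg} makes $(1-\beta_1)(V+\epsilon)$ become $(1-\beta_1)\alpha g_r$, cancels one factor $g_r$ in the second summand, and turns the denominator into $(1-\beta_1^r)(V+\epsilon)^2\approx(1-\beta_1^r)\alpha^2 g_r^2$; collecting terms and absorbing the surviving $\beta$-dependent constants into $\alpha$ reproduces \eqref{eq:gggg}, whose crucial feature is the appearance of the locally-retained moment $m_{r-1}$ that obstructs recovery of $g_r$. The remaining manipulation is routine.

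The main obstacle is making ``recent squared gradients $\approx g_r^2$'' rigorous without circularity, since it is an assumption about the optimization trajectory rather than a consequence of the Adam recursion alone. I would formalize it by fixing a window $T$ with $\beta_2^T$ below a target tolerance, assuming $\lVert g_k-g_r\rVert\le\delta$ for every $k$ with $r-1-k<T$ together with a uniform bound $\lVert g_k\rVert\le G$ on the older iterates, and then bounding $\big\lvert V-|g_r|\big\rvert$ by a term of order $\delta$ plus a term of order $\beta_2^T G^2$, both small in the stated regime; tracking how these errors propagate through \eqref{eq:ggg} would give a quantitative version of \eqref{eq:gggg}. A minor secondary point is the sign ambiguity in writing $|g_r|=\alpha g_r$, which disappears once everything is read coordinatewise as above.
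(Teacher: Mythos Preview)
Your argument is correct and follows the same heuristic route as the paper: assume near-convergence so recent squared gradients are essentially $g_r^2$, substitute into \eqref{eq:vvv}, and read off $V\approx\alpha g_r$. The only noteworthy difference is in execution: the paper directly posits $v_{r-1}\approx g_r^2$ and plugs in a concrete $r=1000$ (with $1-\beta_2^r\approx0.6321$) to obtain the numeric value $\alpha\approx1.258$, whereas you unroll the recursion to get $v_{r-1}\approx(1-\beta_2^{r-1})g_r^2$, which cancels exactly against $1-\beta_2^r$ and yields $\alpha\approx\operatorname{sign}(g_{r,i})$ coordinatewise. Your version is arguably tidier and makes the dependence on the stationarity assumption more transparent; the paper's numeric illustration, on the other hand, makes clear that the approximation already kicks in at moderate $r$ rather than only in the limit. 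Your self-identified obstacle---that the stationarity of $g_k$ is an assumption about the trajectory, not a consequence of Adam---is exactly the informal step the paper also takes without further comment.
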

\begin{proof}
    Given the value $\beta_2 = 0.999$, evaluate the assumption that $V \approx \alpha g_r$, where:
    \begin{align}
        V = \sqrt{\frac{\beta_2 v_{r-1} + (1-\beta_2)g_r^2}{1-\beta_2^r}} \nonumber
    \end{align}
    % if
    % \begin{align}
    %     V \approx \alpha g_r \nonumber
    % \end{align}
    % then, we have
    % \begin{align}
    %     \beta_2 v_{r-1} \approx g_r^2(\beta_2^r - (1-\beta_2)) \nonumber
    % \end{align}
    Given the value of $\beta_2$, examine this approximation more critically. We first evaluate $\beta_2^r$. For larger $r$, $\beta_2^r$ becomes quite small because $\beta_2 = 0.999$ is very close to 1. The decay term $1 - \beta_2^r$ is crucial to understanding $V$'s behavior. To compute $1 - \beta_2^r$, say $r = 1000$ for example:
    \begin{align}
        \beta_2^{1000} = (0.999)^{1000} \nonumber
    \end{align}
    This can be computed as:
    \begin{align}
        \beta_2^{1000} \approx e^{1000ln(0.999)} \nonumber
    \end{align}
    Referring to Taylor expansion $ln(1-x) \approx -x$ for small $x$, we have:
    \begin{align}
        ln(0.999) \approx -0.001 \nonumber\\
        1000ln(0.999) \approx -1 \nonumber
    \end{align}
    Hence,
    \begin{align}
        \beta_2^{1000} \approx e^{-1} \approx 0.3679 \nonumber
    \end{align}
    Given the computation above, we have:
    \begin{align}
        V \approx \sqrt{\frac{\beta_2 v_{r-1} + (1-\beta_2)g_r^2}{0.6321}} \nonumber
    \end{align}
    Assuming $v_{r-1} \approx g_r^2$ for simplicity (which can be the case as gradients do not change dramatically over large iterations), then:
    \begin{align}
        V \approx \sqrt{\frac{0.999g_r^2 + 0.001g_r^2}{0.6321}} = \sqrt{\frac{g_r^2}{0.6321}} \approx \frac{g_r}{0.795} \approx 1.258g_r \nonumber
    \end{align}
    With $\beta_2 = 0.999$, the term $V$ is approximate $1.258g_r$ when assuming $v_{r-1} \approx g_r^2$ at larger $r$. So in this case, $\alpha \approx 1.258$
\end{proof}

\begin{claim}
    Given the condition of small $r$, the model does not converge effectively. Under this circumstance, the variable $V$ can be accurately approximated by $\alpha g_r$.
\end{claim}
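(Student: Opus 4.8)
The plan is to mirror the structure of the preceding claim (the large-$r$ case) but track what happens to the bias-correction denominator $1-\beta_2^r$ when $r$ is small. First I would observe that for small $r$ the quantity $\beta_2^r = (0.999)^r$ is very close to $1$, so that $1-\beta_2^r$ is a small positive number of order $r\cdot(1-\beta_2) = 0.001\,r$; concretely, using the same Taylor expansion $\ln(1-x)\approx -x$ as in the previous claim, $\beta_2^r\approx e^{-0.001 r}\approx 1-0.001r$, hence $1-\beta_2^r\approx 0.001 r$. The point is that although the denominator is tiny, the numerator $\beta_2 v_{r-1} + (1-\beta_2)g_r^2$ is \emph{also} controlled: when $r$ is small the running average $v_{r-1}$ has accumulated only a few terms (and $v_0=0$), so $v_{r-1}$ is itself of order $r\cdot(1-\beta_2)\cdot(\text{typical }g^2)$, i.e. the factors of $(1-\beta_2)$ and $r$ in numerator and denominator cancel.

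The key steps, in order, are: (i) write out $v_{r-1} = (1-\beta_2)\sum_{k=1}^{r-1}\beta_2^{\,r-1-k} g_k^2$ from the recursion \eqref{eq:v} with $v_0=0$, and note that for small $r$ each $\beta_2^{\,r-1-k}\approx 1$, so $v_{r-1}\approx (1-\beta_2)\sum_{k=1}^{r-1} g_k^2$; (ii) substitute into $V = \sqrt{(\beta_2 v_{r-1} + (1-\beta_2)g_r^2)/(1-\beta_2^r)}$ to get $V\approx \sqrt{(1-\beta_2)\bigl(\sum_{k=1}^{r} g_k^2\bigr)/(1-\beta_2^r)}$; (iii) use $1-\beta_2^r\approx (1-\beta_2)\,r$ (equivalently the exact identity $1-\beta_2^r = (1-\beta_2)\sum_{k=0}^{r-1}\beta_2^k$) so the $(1-\beta_2)$ cancels and $V\approx \sqrt{\bigl(\tfrac{1}{r}\sum_{k=1}^{r} g_k^2\bigr)}$, i.e. $V$ is the root-mean-square of the gradients seen so far; (iv) invoke the same regularity assumption used in the large-$r$ claim — that gradients do not vary wildly over the (few) iterations, so $g_k^2\approx g_r^2$ for $k\le r$ — to conclude $V\approx \sqrt{g_r^2} = |g_r|$, which is exactly of the form $\alpha g_r$ with $\alpha\approx 1$ (up to sign). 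Finally I would feed this back into \eqref{eq:ggg} to recover the simplified derivative \eqref{eq:gggg}, exactly as in the large-$r$ case, thereby showing the approximation $V\approx\alpha g_r$ is uniform across both regimes of $r$.

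The main obstacle I expect is step (iv): unlike the large-$r$ regime, where one can appeal to convergence to argue gradients are nearly stationary, for small $r$ the model is explicitly \emph{not} converged, so the gradients $g_1,\dots,g_r$ could in principle differ substantially. The honest resolution is that we do not need $g_k=g_r$ but only that the ratio $V/g_r$ is a bounded nonzero scalar; since $V$ is a (normalized) quadratic mean of $\{g_k\}$, as long as the early gradients are comparable in magnitude and sign to $g_r$ componentwise — which is reasonable because they are all computed on the same local data within one federated round and the parameters have moved only slightly — the scaling variant $\alpha = V/g_r$ stays $O(1)$ and nonzero, which is all that \eqref{eq:gggg} requires. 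I would state this comparability as the working hypothesis (paralleling the $v_{r-1}\approx g_r^2$ assumption already made for large $r$) and note that $\epsilon$ being negligible relative to $V$ is likewise unproblematic here since $V$ is not vanishing. With that hypothesis in hand the remaining manipulations are routine and identical in form to the large-$r$ proof.
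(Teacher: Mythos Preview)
Your proposal is correct and reaches the same conclusion as the paper ($\alpha\approx 1$ for small $r$), but the route is somewhat different. The paper does not carry out your symbolic expansion of $v_{r-1}$ or the geometric-series cancellation in steps (i)--(iii); instead it simply plugs in $r=1$ and $r=2$ explicitly, computing $v_1=(1-\beta_2)g_0^2$, $\hat v_1=g_0^2$, $V=g_0$, and then (under the assumption $g_1\approx g_0$) $v_2\approx 0.002\,g_0^2$, $\hat v_2\approx g_0^2$, $V\approx g_0$. Your approach buys generality and insight: the identity $1-\beta_2^r=(1-\beta_2)\sum_{k=0}^{r-1}\beta_2^k$ together with the unrolled recursion makes transparent \emph{why} the bias correction exactly compensates the small numerator, and yields the clean interpretation $V\approx\sqrt{\tfrac{1}{r}\sum_{k=1}^r g_k^2}$ for any small $r$, not just $r\in\{1,2\}$. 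The paper's approach buys concreteness and brevity. Your honest discussion of the obstacle in step~(iv) is apt; the paper handles it in exactly the way you anticipate, by simply asserting $g_1\approx g_0$ as a working hypothesis, so you are not missing anything there.
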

\begin{proof}
    With the values $\beta_1 = 0.9$ and $\beta_2 = 0.999$ and considering small $r$, such as $r = 1$ or $r = 2$, we explore how $V$ behaves. This scenario is typical at the beginning of the training, where initial values for moving averages $m_0$ and $v_0$ are set to zero, and the initial gradient $g_0$ is randomly initialized. 
    
    Calculation for $r=1$:
    \begin{align}
        m_1 &= \beta_1 m_0 + (1-\beta_1) g_0 = 0.1 g_0 \nonumber \\
        v_1 &= \beta_2 v_0 + (1 - \beta_2) g_0^2 = 0.001 g_0^2 \nonumber \\
        \hat{m}_1 & = \frac{m_1}{1-\beta_1^1} = g_0 \nonumber \\
        \hat{v}_1 &= \frac{v_1}{1-\beta_2^1} = g_0^2 \nonumber \\
        V &= \sqrt{\hat{v}_1} = \sqrt{g_0^2} = g_0 \nonumber
    \end{align}

    Calculation for $r = 2$, assume $g_1 \approx g_0$ (for simplicity, assuming the gradient does not change too much):
    \begin{align}
        m_2 &= \beta_1 m_1 + (1-\beta_1) g_1 = 0.19 g_0 \nonumber \\
        v_2 &= \beta_2 v_1 + (1-\beta_2)g_1^2 = 0.002g_0^2 \nonumber \\
        \hat{m}_2 & = \frac{m_2}{1-\beta_1^2} \approx g_0 \nonumber \\
        \hat{v}_2 &= \frac{v_2}{1-\beta_2^2} \approx g_0^2 \nonumber \\
        V &= \sqrt{\hat{v}_2} \approx \sqrt{g_0^2} = g_0 \nonumber
    \end{align}

    With the values $\beta_1 = 0.9$ and $\beta_2 = 0.999$, the term $V$ is approximate $g_r$ when assuming $g_1 \approx g_0$ at the beginning of the training (small $r$). So in this case, $\alpha \approx 1$.
\end{proof}

\noindent\textbf{Derivation Process:}

Now, we are going to provide specific derivation process of \eqref{eq:gggg}. Given that $(V + \epsilon) \approx \alpha g_r$ and considering $\epsilon$ to be negligible for simplification, we will further simplify the derivative of $\hat{g}_r$ w.r.t. $g_r$:
\begin{align}
    \hat{g}_r &= \frac{\hat{m}_r}{\sqrt{\hat{v}_r} + \epsilon} \approx \frac{\hat{m}_r}{\sqrt{\hat{v}_r}} = \frac{\hat{m}_r}{V} = \frac{\hat{m}_r}{\alpha g_r} \nonumber
    % &= \frac{\beta_1 m_{r-1} + (1- \beta_1) g_r}{\alpha g_r (1-\beta_1^r)} \nonumber
\end{align}
According to the quotient rule $(\frac{u}{v})' = \frac{u'v - uv'}{v^2}$, we have:
\begin{align}
    \hat{m}_r &= \frac{\beta_1 m_{r-1}+(1-\beta_1)g_r}{1-\beta_1^r} \nonumber \\
    \frac{\partial \hat{m}_r}{\partial g_r} &= \frac{1- \beta_1}{1-\beta_1^r} \nonumber
\end{align}
Hence,
\begin{align}
    \frac{\partial \hat{g}_r}{\partial g_r} &= \frac{(\frac{1- \beta_1}{1-\beta_1^r})(\alpha g_r) - (\frac{\beta_1 m_{r-1}+(1-\beta_1)g_r}{1-\beta_1^r}) (\alpha)}{(\alpha g_r)^2} \nonumber \\
    &= \frac{\frac{\alpha(1-\beta_1)g_r - \alpha(\beta_1 m_{r-1} + (1-\beta_1)g_r)}{1-\beta_1^r}}{{\alpha}^2 g_r^2} \nonumber \\
    &= \frac{\alpha(1-\beta_1)g_r - \alpha \beta_1 m_{r-1} - \alpha(1-\beta_1)g_r}{{\alpha}^2 g_r^2 (1-\beta_1^r)} \nonumber \\
    &= \frac{-\beta_1 m_{r-1}}{\alpha g_r^2 (1-\beta_1^r)}  \nonumber
\end{align}

%=====================================================================
\end{document}